\def\ie{\textit{i.e.}\xspace}
\def\eg{\textit{e.g.}\xspace}
\newtheorem{theorem}{Theorem}
\newtheorem{lemma}[theorem]{Lemma}
\theoremstyle{remark}
\newtheorem*{remark}{Remark}
\title{\Large \bf Top-$n\sigma$: Not All Logits Are You Need}
\newcommand{\AnD}{\qquad}
\begin{document}

\newcommand{\topp}{top-$p$ }
\newcommand{\minp}{min-$p$ }
\newcommand{\topk}{top-$k$ }

\date{}


\author{Chenxia Tang \AnD Jianchun Liu \AnD Hongli Xu \AnD Liusheng Huang\\
	School of Computer Science and Technology, University of Science and Technology of China\\
	Suzhou Institute for Advanced Research, University of Science and Technology of China
}

\maketitle

\thispagestyle{empty}

\begin{abstract}

Large language models (LLMs) typically employ greedy decoding or low-temperature sampling for reasoning tasks, reflecting a perceived trade-off between diversity and accuracy. We challenge this convention by introducing top-$n\sigma$, a novel sampling method that operates directly on pre-softmax logits by leveraging a statistical threshold. Our key insight is that logits naturally separate into a Gaussian-distributed noisy region and a distinct informative region, enabling efficient token filtering without complex probability manipulations. Unlike existing methods (\eg, top-$p$, min-$p$) that inadvertently include more noise tokens at higher temperatures, top-$n\sigma$ maintains a stable sampling space regardless of temperature scaling. We also provide a theoretical analysis of top-$n\sigma$ to better understand its behavior. The extensive experimental results across four reasoning-focused datasets demonstrate that our method not only outperforms existing sampling approaches but also surpasses greedy decoding, while maintaining consistent performance even at high temperatures. 
\end{abstract}

\section{Introduction}
Large Language Models (LLMs) have revolutionized natural language processing (NLP), demonstrating remarkable capabilities across various domains, from code generation~\citep{chen2021evaluating} to mathematical reasoning~\citep{lewkowycz2022solving}, and complex problem-solving~\citep{wei2022chain}. 
These advancements are largely driven by the models' sophisticated text generation mechanisms, which underpin their versatility in diverse applications.
Token sampling, as the first step in text generation, plays a crucial role in determining the quality of model outputs.
While sampling strategies offer the potential advantage of generating diverse outputs through multiple attempts, their effectiveness in reasoning-intensive tasks remains a significant challenge.

Conventional wisdom suggests that deterministic methods like greedy decoding often surpass stochastic sampling in tasks requiring precise reasoning. This perception arises from observations that existing sampling techniques prioritize diversity and reduced repetition over reasoning accuracy. For instance, probability-based methods such as temperature-scaling~\citep{ackley1985learning}, nucleus sampling~\citep{holtzman2019curious}, \topk~\citep{fan-etal-2018-hierarchical}, and \minp~\citep{nguyen2024min}, as well as entropy-based techniques like mirostat sampling~\citep{basu2020mirostat} and $\eta$-sampling~\citep{hewitt2022truncation}, are not specifically designed to maintain reasoning quality.
We argue that this limitation stems from these methods' inability to effectively filter out irrelevant tokens, leading to a trade-off between diversity and reasoning fidelity. 
Nonetheless, it is essential to recognize that LLMs are inherently probabilistic models; deterministic methods like greedy decoding introduce bias by failing to accurately reflect the model's learned distribution. 
This insight underscores the need for more sophisticated sampling techniques that can leverage LLMs' probabilistic nature while preserving robust reasoning capabilities.

While existing approaches primarily focus on manipulating probability distributions, we argue that examining the pre-softmax logits can reveal deeper insights into the model's generation process. 
As shown in Figure~\ref{fig:logits_distribution}, an intriguing observation is that logit distributions exhibit a highly regular pattern, typically comprising two distinct components: a Gaussian-like distribution of background tokens and a set of prominent outliers, respectively. 
Although the majority of tokens follow a Gaussian distribution, as shown in Figure~\ref{fig:probs_dis}, the outlier tail dominates the probability mass. 
We refer to these components as the \textbf{noisy region} and the \textbf{informative region}, respectively.
It is worth noting that the maximum logit deviates from the mean by more than 5$\sigma$ (the standard deviation), substantially exceeding typical statistical criteria for outlier identification.

\begin{figure}[t]
\begin{subfigure}[t]{0.48\linewidth}
    \centering
    \includegraphics[scale=0.4]{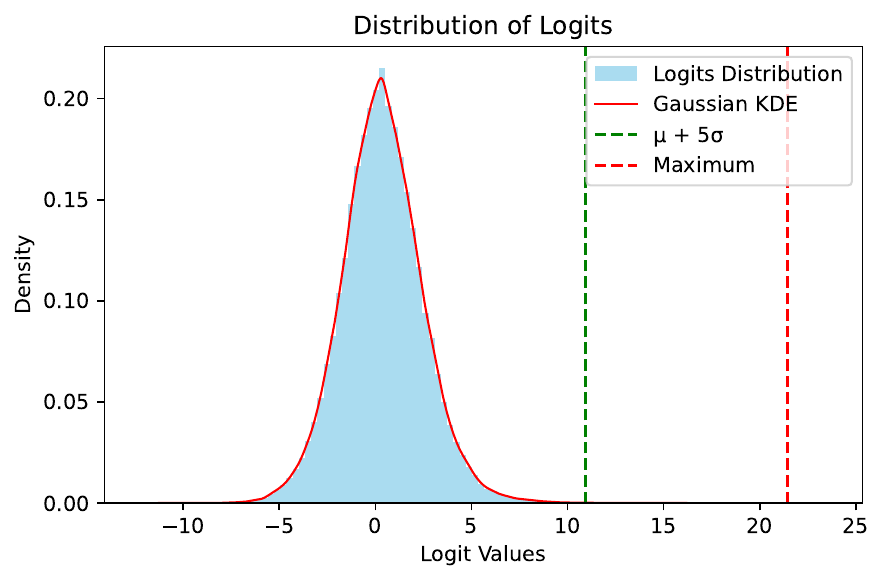}
    \caption{Distribution of logits}
    \label{fig:logits_dis}
\end{subfigure}
\hfill
\begin{subfigure}[t]{0.48\linewidth}
    \centering
    \includegraphics[scale=0.4]{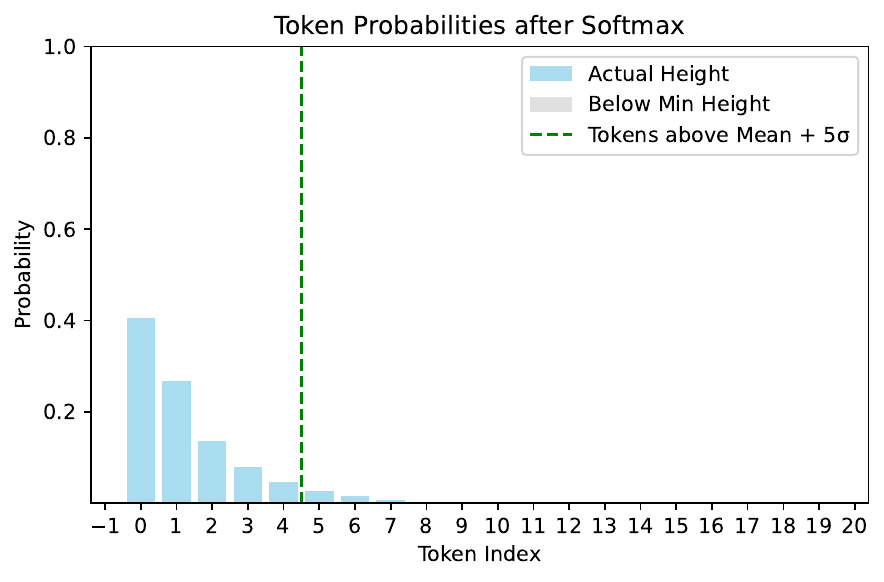}
    \caption{Descendingly sorted Probabilities. Only the top 20 tokens are shown. }
    \label{fig:probs_dis}
\end{subfigure}
    \caption{Distribution of logits and descendingly sorted probabilities of LLaMA3-8B-Instruct on an AQuA sample. Note that the leading tokens in the right plot (with higher probabilities) correspond to the right-side region of the logits distribution. The maximum logit is approximately $10\sigma$ above the mean of the distribution.}
    \label{fig:logits_distribution}
\end{figure}

A remarkable finding is that, despite not examining the logits distribution, accurately hypothesized that LLM output probabilities are a mixture of the true distribution and a smoothing component \citep{hewitt2022truncation}.
However, they mistakenly assume that this smooth component follows a uniform distribution. 
Our observation reveals that the mixture involves not a uniform distribution but a log-normal distribution, which naturally arises from applying softmax to the Gaussian-distributed logits in the noisy region.

Based on this observation, we propose top-$n\sigma$, a simple yet effective sampling method that naturally distinguishes between noisy and informative tokens. 
While this approach shares the spirit of nucleus sampling in identifying a core set of tokens (nucleus), top-$n\sigma$ offers several unique advantages. 
Unlike top-$k$ or greedy decoding, which imposes static constraints, or top-$p$ and min-$p$, which inadvertently include more irrelevant tokens at higher temperatures, top-$n\sigma$ maintains a stable and meaningful sampling space independent of temperature scaling.
This stability is particularly valuable for recent test-time scaling techniques that rely on extensive sampling to explore the solution space \citep{brown2024large, wang2024chain, qi2024mutual, openaio1}. 
Furthermore, top-$n\sigma$ operates directly on logits without requiring sorting operations or additional softmax transformations, making it computationally efficient. 
By identifying the informative tokens through statistical properties of the logit distribution, our method achieves both effectiveness and efficiency in guiding the sampling process.

Our main contributions include:

\begin{itemize}
    \item \textbf{Novel Logit-Based Perspective:} We introduce a novel analytical framework focusing on pre-softmax logit distributions, providing fundamental insights that benefit not only sampling strategy development but also potential improvements in model training approaches.
    
    \item \textbf{Efficient Top-$n\sigma$ Algorithm:} We propose a conceptually simple yet powerful sampling method that operates directly on logits, achieving superior generation quality while maintaining computational efficiency through its sorting-free and softmax-free implementation. It is very easy to integrate top-$n\sigma$ into existing LLM frameworks.

  \item \textbf{Ready for Test-Time Scaling Techniques:} Our top-$n\sigma$ algorithm enables a more granular exploration of the solution space, achieving a better balance between exploration and exploitation. This characteristic makes it particularly effective for test-time scaling techniques. 
    \item \textbf{Theoretical Analysis:} We provide comprehensive quantitative analyses of top-$n\sigma$, examining its cumulative probability mass characteristics and proving its temperature invariance property. These theoretical foundations establish a solid basis for both the implementation and understanding of the method.
    
    \item \textbf{Extensive Empirical Validation:} We demonstrate the effectiveness of our approach through rigorous experiments across four diverse datasets, showing significant improvements in generation quality compared to existing methods.
\end{itemize}

\section{Insights}

From Figure~\ref{fig:logits_distribution}, we observe that the model's logits appear to follow a well-defined statistical distribution. This empirical observation naturally raises several fundamental questions:

\begin{enumerate}
\item How can we interpret probability-based sampling methods (\eg, nucleus sampling) from the perspective of logit space?
\item What are the underlying characteristics of logit distributions in large language models?
\item How can we leverage these distributions to effectively distinguish between noisy and informative regions?
\end{enumerate}

In this section, we systematically address the aforementioned questions and present our findings.

\subsection{From Probabilities to Logits}

Modern large language models (LLMs) generate text through a two-step process: first producing logits, then converting them to probabilities via softmax transformation. To better understand sampling methods in the logit space, we start by examining how conventional probability-based methods, particularly nucleus sampling, can be reinterpreted from a logit perspective.

Given an input sequence, an LLM first generates a logit vector $l =(l_1,\cdots, l_V) \in \mathbb{R}^V$, where $V$ is the vocabulary size. These logits are firstly scaled by temperature: $l \leftarrow l/T$ and then transformed into probabilities $p=(p_1,\cdots, p_V) \in\mathbb{R}^V$ through the softmax function:

\begin{equation}
p_i = \frac{e^{l_i}}{s}, \quad \text{where } s = \sum_{j=1}^V e^{l_j}, 1\leq i \leq V
\end{equation}

Fundamentally, all sampling methods operate by determining a probability threshold $p^{(t)}\in [0, 1]$. Tokens with probabilities above this threshold form the sampling nucleus, and their cumulative probability defines the \textit{nucleus mass}. Formally, for a threshold $p^{(t)}$, the nucleus $\mathcal{N}$ is:

\begin{equation}
\mathcal{N} = \{i \mid p_i \geq p^{(t)}\} = \{i\mid l_i \geq t\}
\end{equation}

This probability-based selection can be equivalently performed in the logit space by determining a corresponding logit threshold $t = \ln(s\cdot p^{(t)})$. 

Based on our empirical observations in Figure~\ref{fig:logits_distribution}, we hypothesize that logits follow a certain distribution $f$. Under this assumption, we can derive the relationship between nucleus sampling parameters and logit threshold $t$. 

\begin{theorem}
\label{thrm:sum_to_int}
Consider $V$ logits $\{l_i\mid i=1, \cdots, V\}$ independently and identically distributed according to $f(x)$. For any threshold $t$, we have:

$$\sum_{l_i > t}e^{l_i} \overset{P}{\rightarrow} V\int_{t}^{+\infty} e^xf(x)\mathrm{\, d}x$$ 
\end{theorem}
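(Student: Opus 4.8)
The plan is to recognize this as a Weak Law of Large Numbers statement in disguise. Define the random variables $X_i = e^{l_i}$ for $i = 1, \ldots, V$; since the $l_i$ are \iid with density $f$, the $X_i$ are \iid as well. The quantity $\sum_{l_i > t} e^{l_i}$ is exactly $\sum_{i=1}^V X_i \mathbf{1}[l_i > t] = \sum_{i=1}^V Y_i$, where $Y_i := e^{l_i}\mathbf{1}[l_i > t]$ is a truncated version of $X_i$. These $Y_i$ are again \iid, with expectation $\mathbb{E}[Y_i] = \int_t^{+\infty} e^x f(x)\,\mathrm{d}x$. So the claim is that $\frac{1}{V}\sum_{i=1}^V Y_i \overset{P}{\to} \mathbb{E}[Y_1]$, after multiplying both sides by $V$; equivalently, $\sum_{l_i > t} e^{l_i} \approx V\,\mathbb{E}[Y_1]$ in the sense that the normalized sum converges.

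The key steps, in order, would be: (i) rewrite the left-hand side as $\sum_i Y_i$ with $Y_i = e^{l_i}\mathbf{1}[l_i>t]$ and observe independence and identical distribution; (ii) compute $\mathbb{E}[Y_1] = \int_t^\infty e^x f(x)\,\mathrm{d}x$ by the definition of expectation for a function of $l_1$; (iii) invoke the Weak Law of Large Numbers to conclude $\frac{1}{V}\sum_i Y_i \overset{P}{\to}\mathbb{E}[Y_1]$, hence $\sum_i Y_i - V\,\mathbb{E}[Y_1] = o_P(V)$, which is the asserted convergence once one interprets the $\overset{P}{\to}$ in the normalized (per-token, or rescaled) sense. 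One should state carefully in what sense the convergence holds: literally $\sum_{l_i>t}e^{l_i}$ and $V\int_t^\infty e^x f(x)\,\mathrm{d}x$ both diverge as $V\to\infty$, so the precise statement is that their ratio tends to $1$ in probability, or equivalently the difference is $o_P(V)$; I would make this explicit at the outset.

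The main obstacle is the integrability/moment condition. The plain WLLN in the form I want (Khinchin's theorem) needs $\mathbb{E}|Y_1| < \infty$, i.e. $\int_t^\infty e^x f(x)\,\mathrm{d}x < \infty$. For the Gaussian $f$ that motivates the paper this is automatic since $e^x$ against a Gaussian tail is integrable, but for a general $f$ it need not hold (e.g. if $f$ has a heavy enough right tail). So I would either (a) add the hypothesis that $\int_t^\infty e^x f(x)\,\mathrm{d}x < \infty$ (equivalently that the truncated moment generating function is finite), which is the honest fix, or (b) if one additionally wants a rate, assume a second moment $\mathbb{E}[Y_1^2] = \int_t^\infty e^{2x} f(x)\,\mathrm{d}x < \infty$ and use Chebyshev's inequality directly: for any $\varepsilon > 0$,
$$\mathbb{P}\!\left(\left|\tfrac{1}{V}\sum_i Y_i - \mathbb{E}[Y_1]\right| > \varepsilon\right) \le \frac{\mathrm{Var}(Y_1)}{V\varepsilon^2} \to 0,$$
which gives the WLLN conclusion with an explicit $O(1/V)$ bound on the failure probability. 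The Chebyshev route is cleaner to write and yields a quantitative statement, so I would present that, flagging the finite-variance assumption (trivially satisfied in the Gaussian noisy-region regime that the theorem is applied to). A secondary, more cosmetic obstacle is simply pinning down notation: making sure the reader understands $\sum_{l_i > t}$ means summation over the random index set $\{i : l_i > t\}$, and that the convergence is the appropriate rescaled one; a one-line remark handles this.
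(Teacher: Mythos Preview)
Your approach is correct and is the natural one: rewrite the sum as $\sum_{i=1}^V Y_i$ with $Y_i = e^{l_i}\mathbf{1}[l_i>t]$, compute $\mathbb{E}[Y_1] = \int_t^\infty e^x f(x)\,\mathrm{d}x$, and apply the Weak Law of Large Numbers (via Khinchin or Chebyshev). Your diagnosis of the two caveats---that the convergence must be read in the rescaled sense (ratio $\to 1$ in probability, or difference $o_P(V)$), and that one needs the finite-moment hypothesis $\int_t^\infty e^x f(x)\,\mathrm{d}x<\infty$---is also on point and worth stating explicitly, since the theorem as written glosses over both.

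One cannot compare against the paper's own argument in detail: the appendix section~\ref{app:proof_of_sum_to_int} referenced for the proof is empty in this preprint (the authors note the appendix will be supplemented later). That said, the paper itself remarks that ``the theorem itself is conceptually simple,'' which strongly suggests the intended proof is exactly the WLLN route you outline; there is really no other natural way to establish this kind of empirical-average-to-expectation statement. Your write-up, with the integrability assumption made explicit and the convergence interpreted correctly, would in fact be more careful than what the paper's phrasing suggests.
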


The complete proof is provided in Appendix~\ref{app:proof_of_sum_to_int}. Although the theorem itself is conceptually simple, it gives rise to a series of powerful lemmas with practical implications: 

\begin{lemma}
    Denote $\mathcal{I}(t) = \int_{t}^{+\infty}e^xf(x)\mathrm{\, d}x$, and thus $s = V\cdot \mathcal{I}(-\infty)$.
    The nucleus mass of a given threshold $t$ is 

    \begin{equation}
    \label{eq:nucleus_mass}    
    p_{\mathcal{N}} = \sum_{i\in\mathcal{N}}p_i = \frac{\mathcal{I}(t)}{\mathcal{I}(-\infty)}
    \end{equation}

\end{lemma}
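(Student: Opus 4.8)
The plan is to derive the nucleus mass directly from the definition together with Theorem~\ref{thrm:sum_to_int}. Recall that by definition $p_{\mathcal{N}} = \sum_{i \in \mathcal{N}} p_i$, and since $\mathcal{N} = \{i \mid l_i \geq t\}$ and $p_i = e^{l_i}/s$, this is simply $p_{\mathcal{N}} = \frac{1}{s}\sum_{l_i > t} e^{l_i}$. The goal is then to show that both the numerator and the normalizing constant $s$ can be replaced by their integral counterparts.

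The key steps, in order, are as follows. First, I would apply Theorem~\ref{thrm:sum_to_int} to the numerator with threshold $t$, giving $\sum_{l_i > t} e^{l_i} \overset{P}{\to} V \mathcal{I}(t)$. Second, I would apply the same theorem with the threshold $t = -\infty$ (or note that the sum over all logits $s = \sum_{j=1}^V e^{l_j}$ is exactly the $t \to -\infty$ case), yielding $s \overset{P}{\to} V \mathcal{I}(-\infty)$; this also justifies the stated identity $s = V \cdot \mathcal{I}(-\infty)$ in the limit. Third, I would combine these via the continuous mapping theorem / Slutsky's theorem: the ratio of the two convergent-in-probability quantities converges in probability to the ratio of the limits, provided $\mathcal{I}(-\infty) \neq 0$ (which holds since $e^x f(x) > 0$ on a set of positive measure). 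This gives
\begin{equation}
p_{\mathcal{N}} = \frac{\sum_{l_i > t} e^{l_i}}{s} \overset{P}{\to} \frac{V\mathcal{I}(t)}{V\mathcal{I}(-\infty)} = \frac{\mathcal{I}(t)}{\mathcal{I}(-\infty)},
\end{equation}
and the factors of $V$ cancel, leaving the claimed expression.

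The main obstacle — and it is more a matter of careful bookkeeping than a genuine difficulty — is handling the $t = -\infty$ endpoint rigorously: Theorem~\ref{thrm:sum_to_int} as stated is for a finite threshold $t$, so strictly speaking one should either restate it to allow $t = -\infty$, observe that $\mathcal{I}(-\infty) = \mathbb{E}[e^X]$ (finite whenever the moment generating function of $f$ exists at $1$, which is implicitly assumed for the whole framework to make sense) and invoke the weak law of large numbers directly on $\frac{1}{V}\sum_j e^{l_j}$, or take a limit over finite thresholds. I would favor the direct WLLN argument for $s$ since it is the cleanest. A secondary point worth a sentence is that applying Slutsky requires the joint convergence of numerator and denominator, which is immediate here because both are deterministic-limit (convergence in probability to a constant) statements, so no independence or joint-distribution subtleties arise.
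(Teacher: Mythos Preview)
Your proposal is correct and matches the paper's intent: the paper states this lemma without proof, treating it as an immediate corollary of Theorem~\ref{thrm:sum_to_int} applied once at threshold $t$ and once at $t=-\infty$ (the identity $s = V\cdot\mathcal{I}(-\infty)$ is simply asserted). Your explicit invocation of the WLLN for $s$ and Slutsky's theorem for the ratio is exactly the right way to make the implicit step rigorous, and is more careful than the paper itself.
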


\begin{remark}
A key implication of this lemma is that the logit threshold for nucleus sampling can be analytically derived given only the probability distribution of the logits. 
We proceed to demonstrate this by deriving closed-form expressions for two fundamental cases. Given the cumulative probability threshold $p$:

\begin{enumerate}
\item Gaussian distribution: $l_i \sim N(\mu, \sigma^2)$. The threshold is \begin{equation}
\label{eq:gaussian}
    t = \mu + \sqrt{2}\sigma(\mathrm{erf}^{-1}(1-2p)) + \sigma^2
\end{equation}
where $\mathrm{erf}(\cdot)$ denotes the error function\citep{andrews1998special}.

\item Uniform distribution: $l_i \sim U(M - a, M)$, The threshold is 
\begin{equation} \label{eq:uniform}
    t = M - \ln\left[\frac{1}{1-p(1-e^{-a})}\right]
\end{equation}

\end{enumerate}

The detailed derivations of these expressions are provided in Appendix~\ref{app:thres_cal}. These analytical expressions provide valuable insights into the relationship between the logit distribution and the sampling threshold. While the actual logit distribution in language models may be more complex, these fundamental cases serve as important theoretical benchmarks and can guide the design of more efficient sampling algorithms.
\end{remark}

\subsection{Understanding the Real Distribution of Logits}

\subsubsection{Noisy Region}

As demonstrated in Figure~\ref{fig:logits_distribution}, the vast majority of tokens exhibit logits that follow a Gaussian distribution. Since the probabilities corresponding to these logits are typically negligible and have been traditionally considered noise in previous works, we designate this region as the \textit{noisy region}. This characterization aligns well with statistical intuition, where Gaussian distributions often indicate the presence of random noise in a system.

When the margin between this noisy region and the informative region narrows, the noise-derived probabilities -- after softmax transformation -- tend to interfere with the model's generation process, potentially degrading the output quality. This phenomenon is particularly evident in high-temperature sampling scenarios, where all current non-deterministic sampling algorithms perform poorly. We attribute this degradation to the diminished gap between the two regions under high temperatures, allowing the noise distribution to dominate the probability landscape. We identify several key factors contributing to this noise:

\paragraph{Training Data Noise} The inherent noise and variations in the training data naturally propagate through the model's learning process, ultimately manifesting as part of the Gaussian distribution in the logit space.

\paragraph{Regularization Effects} Various regularization techniques employed in the training process, while crucial for preventing overfitting, have the side effect of assigning small but non-zero probabilities to semantically irrelevant tokens in the vocabulary space.

\paragraph{Noise of Silence} The model's architectural constraint of assigning finite values to irrelevant tokens (which ideally should be ``silent'' with $-\infty$ logits) results in a distinctive noise pattern.  This is an inherent flaw of the softmax function, which is widely discussed in \citep{miller2023attention, xiao2023efficient}.

While addressing these noise sources fundamentally lies beyond the scope of this paper, we believe these insights could contribute to improving training procedures in future work. In this paper, we will focus on how to eliminate these noisy tokens during the inference.

\subsubsection{Informative Region}

As shown in Figure~\ref{fig:logits_distribution}, a small subset of tokens accounts for the majority of the probability mass. This concentration suggests that large language models possess specific knowledge about these tokens, hence we designate this as the \textit{informative region}.

Due to the limited number of tokens in this region, it is challenging to make definitive claims about the underlying distribution. However, recent empirical observations regarding sampling methods have provided interesting insights into this region's characteristics.

Notably, we have observed that the \minp sampling method yields significant improvements in generation quality. This approach establishes a baseline probability threshold $p$ and eliminates all probability values below $p_{max} \cdot p$, where $p_{max}$ is the maximum probability. While \citet{nguyen2024min} derived this method empirically, we have made a surprising theoretical discovery:

\begin{theorem}
For logits following a uniform distribution, min-p sampling is equivalent to top-$(1-p)$ sampling.
\end{theorem}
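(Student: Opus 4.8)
The plan is to make both sides of the claimed equivalence explicit using the uniform-distribution threshold formula already derived in the remark, namely Equation~\eqref{eq:uniform}, and then to read off a parameter identity. First I would fix the model: assume $l_i \sim U(M-a, M)$, so that $M$ is the (essentially attained) maximum logit and $a$ is the spread of the support. I would then express the min-$p$ rule directly in logit space: min-$p$ with parameter $p$ keeps exactly those tokens with $p_i \ge p\cdot p_{\max}$, which after taking logarithms becomes $l_i \ge l_{\max} - \ln(1/p)$. Since $l_{\max} \approx M$ under the uniform model (the maximum of $V$ i.i.d. uniforms concentrates at the right endpoint as $V\to\infty$), the min-$p$ threshold is $t_{\text{min-}p} = M - \ln(1/p)$.

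Next I would compute the logit threshold that top-$q$ sampling (nucleus sampling with cumulative mass $q$) induces under the same uniform model. This is exactly Equation~\eqref{eq:uniform} with $p$ replaced by $q$: $t_{\text{top-}q} = M - \ln\!\bigl[\tfrac{1}{\,1-q(1-e^{-a})\,}\bigr]$. The two sampling schemes select the same nucleus if and only if they induce the same logit threshold, so I would set $t_{\text{min-}p} = t_{\text{top-}q}$, i.e. $\ln(1/p) = \ln\!\bigl[\tfrac{1}{1-q(1-e^{-a})}\bigr]$, which simplifies to $p = 1 - q(1-e^{-a})$, hence $q = \tfrac{1-p}{1-e^{-a}}$. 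Here I expect the main subtlety: the clean statement ``top-$(1-p)$'' holds in the regime where $a$ is large (the logit support is wide, which matches the empirical picture in Figure~\ref{fig:logits_distribution} where the spread is many $\sigma$), so that $e^{-a}\approx 0$ and $q \to 1-p$. I would therefore either state the result in the $a\to\infty$ limit or carry the $(1-e^{-a})$ factor through and note it is negligible in practice; this approximation step is the place where rigor has to be handled carefully rather than any hard computation.

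Finally I would close the argument by noting the reverse direction is immediate from the same threshold identity: since Equation~\eqref{eq:nucleus_mass} shows the nucleus mass is a strictly monotincreasing function of the threshold $t$ through $\mathcal{I}(t)$, equality of thresholds is equivalent to equality of nuclei and of nucleus masses, so min-$p$ at level $p$ and top-$(1-p)$ really do produce identical sampling distributions under the (wide-support) uniform logit model. I would present this as a short computation referencing the already-proved lemma and remark, with the only genuine content being the observation that $p_{\max}$ corresponds to the right endpoint $M$ of the support and the asymptotic $e^{-a}\to 0$.
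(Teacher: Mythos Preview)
Your proposal is correct and follows essentially the same approach as the paper: both compute the min-$p$ threshold in logit space as $M+\ln p$ and compare it to the top-$q$ threshold from Eq.~\eqref{eq:uniform}. The only cosmetic difference is that the paper works directly with the improper case $l_i\sim U(-\infty,M)$ (i.e., sets $a=\infty$ at the outset) to obtain the exact identity $q=1-p$, whereas you keep $a$ finite, derive $q=(1-p)/(1-e^{-a})$, and then pass to the $a\to\infty$ limit; your version is arguably more careful about the limiting step, but the content is the same.
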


\begin{proof}
    We first put \minp into the logits space. For $l_i\sim U(-\infty, M)$, the threshold is 
    \begin{equation}
    t = \ln(s \cdot p_{max}\cdot p) = \ln(e^{M}\cdot p) = M + \ln p
    \end{equation}

    Considering Eq.~(\ref{eq:uniform}), since $l_i\sim U(-\infty, M)$, $a = -\infty$, so the threshold for top-$(1-p)$ is the exactly same:

    \begin{equation}
        t = M - \ln\left[\frac{1}{1-(1-p)}\right] = M + \ln p
    \end{equation}
\end{proof} This analysis reveals that despite min-$p$'s claimed adaptiveness, it essentially performs a static truncation in the logits space. Furthermore, the effectiveness of \minp sampling suggests that the informative region approximately follows a \emph{uniform} distribution. 

\subsection{Determine the Boundary}

A natural approach to distinguish between informative and noisy regions would be treating the informative region as outliers of the noisy distribution. Under this assumption, conventional methods such as the $\mu + 3\sigma$ rule \citep{kazmier2009schaums} could be applied to determine the boundary. However, our empirical observations suggest that this approach may not be optimal for the task at hand.

We define the $\sigma$-distance as the number of standard deviations between the maximum probability and the mean value of the distribution, \ie, $\sigma\text{-distance} = \frac{M - \mu}{\sigma}$, where $M$ is the maximum logit, following the same notation as in the uniform distribution case.

\begin{figure}[htbp]
    \centering
    \begin{subfigure}[t]{0.48\linewidth}
        \centering
        \includegraphics[scale=0.5]{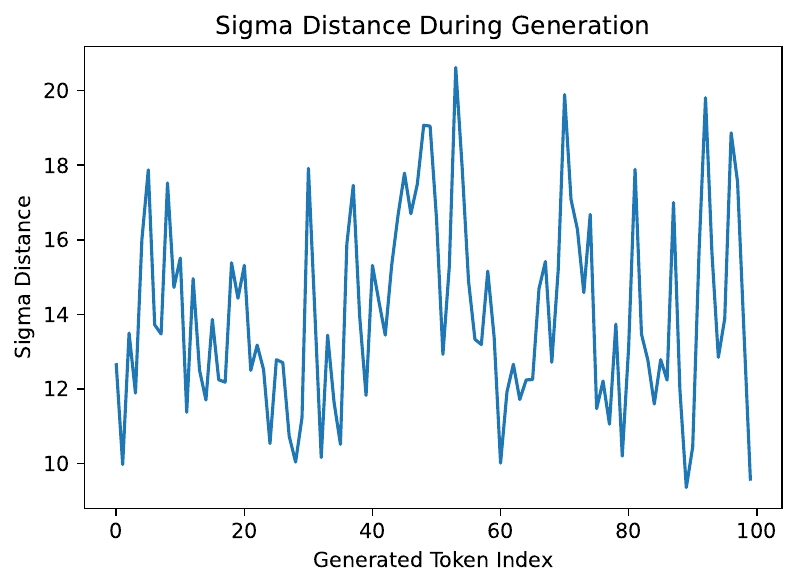}
        \caption{$\sigma$-distance during generation}
        \label{fig:sigma_dis}
    \end{subfigure}
    \hfill 
    \begin{subfigure}[t]{0.48\linewidth}
        \centering
        \includegraphics[scale=0.5]{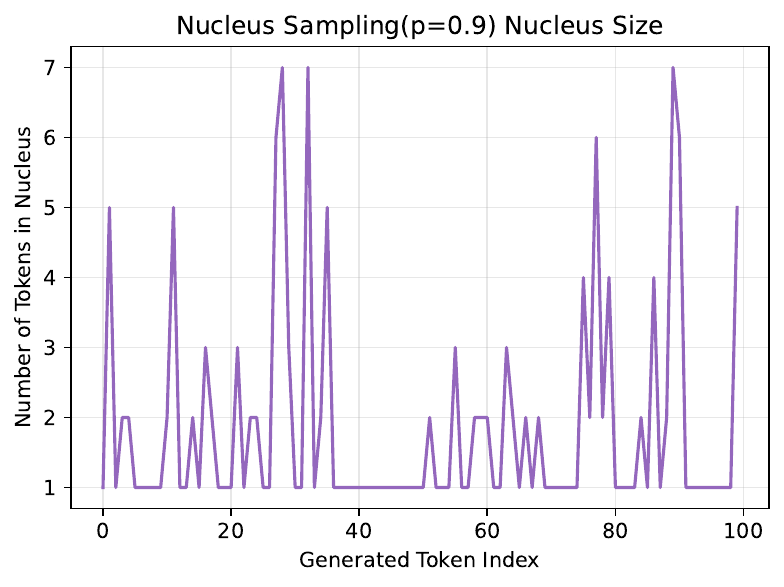}
        \caption{Nucleus size during generation}
        \label{fig:nucleus_sizes}
    \end{subfigure}
    \caption{Relationship between $\sigma$-distance and nucleus size when temperature $T=1$ during generation. The plots demonstrate an inverse relationship: high $\sigma$-distances correspond to small nucleus sizes (mostly 1), while lower $\sigma$-distances (though remaining above 10) correlate with larger nucleus sizes.}
    \label{fig:sigma_dis_nucleus_size}
\end{figure}

As illustrated in Figure~\ref{fig:sigma_dis}, the distance between the maximum probability and the mean value consistently surpasses $10\sigma$, exhibiting substantial fluctuations during the generation. In conjunction with Figure~\ref{fig:nucleus_sizes}, an intriguing pattern emerges: larger nucleus sizes correlate with lower $\sigma$-distances. This observation contradicts traditional outlier detection approaches such as the $\mu+3\sigma$ criterion, where higher $\sigma$-distances should theoretically encompass more tokens. This finding strongly suggests that informative tokens should not be treated as outliers of noisy tokens. In fact, a higher $\sigma$-distance indicates the model's strong confidence in its output, resulting in a smaller nucleus size. 

This leads us to a fundamental reversal of perspective: instead of treating the \textbf{minority as outliers} from the Gaussian distribution, we should recognize the \textbf{majority as outliers} from the uniform distribution. This counterintuitive shift challenges the conventional assumption of outlier detection, where anomalies are typically rare events. In our case, the tokens in noisy regions are essentially irrelevant candidates that should be eliminated from consideration, despite their numerical majority.

To implement this perspective, we introduce the top-$n\sigma$ algorithm. Our algorithm begins from the maximum value and extends downward, using the standard deviation of the distribution to dynamically adjust the boundary. Specifically, we capture a region that extends $n\sigma$ below the maximum value, where $n$ is determined empirically to balance between preserving informative tokens and excluding the uniform noise. In practice, we find $n=1.0$ achieves satisfactory performance.

\section{Algorithm Description}

Our method introduces a statistical threshold to filter candidate tokens before sampling. Algorithm \ref{alg:sampling} outlines the main steps of our approach. 

\begin{algorithm}[ht]
\caption{Top-$n\sigma$ Sampling}
\label{alg:sampling}
\begin{algorithmic}[1]
\State \textbf{Input:} Input context $x$, temperature $T$, threshold multiplier $n$ 
\State \textbf{Output:} Next token
\State Compute logits $l = \text{LLM}(x)$
\State Scale logits: $l' = l/T$
\State Calculate $M = \max(l')$ and $\sigma = \text{std}(l')$
\State Create mask: $m_i = \begin{cases} 
    1 & \text{if } l'_i \geq M - n\sigma \\
    0 & \text{otherwise}
\end{cases}$
\State Apply mask: $l'_i = \begin{cases} 
    l'_i & \text{if } m_i = 1 \\
    -\infty & \text{otherwise}
\end{cases}$
\State $p = \text{softmax}(l')$
\State Sample token from distribution $p$
\end{algorithmic}
\end{algorithm}

The algorithm is computationally efficient as it operates directly on logits without requiring extra softmax transformation or sorting. The core operations (\texttt{max} and \texttt{std}) are highly optimized for modern GPU implementations. The execution time typically remains within tens of microseconds, with the kernel launch overhead accounting for the majority of the computational cost. To better understand its behavior, we analyze two important aspects. First, we examine the probability mass of selected tokens in Section~\ref{sec:nucleus}, which provides a probabilistic view of our statistical filtering process. More importantly, we prove that our method selects a consistent number of tokens regardless of the temperature parameter in Section~\ref{sec:temp}. This temperature invariance property ensures stable behavior across different sampling temperatures, while still maintaining sufficient diversity in the candidate pool.

\subsection{Nucleus Mass} 
\label{sec:nucleus}

While the exact mixing mechanism of distributions in the logits space remains unknown, we focus our analysis on two boundary cases: (1) when the distribution approximates Gaussian, which typically occurs in high-temperature scenarios, and (2) when the distribution tends toward uniformity, as observed in low-temperature settings or when the model exhibits high confidence (usually accompanied by larger standard deviations). These two cases provide theoretical bounds for analyzing our method's behavior in practice.

In the Gaussian case, $l\sim N(\mu, \sigma^2)$, considering Eq.~(\ref{eq:gaussian}), we obtain the following equation:
\begin{equation}
   M-n\sigma = \mu + \sqrt{2}\sigma(\mathrm{erf}^{-1}(1-2p)) + \sigma^2
\end{equation}

Thus, 
\begin{equation}
    p = \frac{1}{2}\left[1 -\textrm{erf}\left(\frac{M-\mu-n\sigma-\sigma^2}{\sqrt{2}\sigma} \right)\right]
\end{equation}

A key property of this formulation is that as $\sigma$ approaches zero, the term inside $\textrm{erf}$ approaches positive infinity, causing $p$ to converge to zero. This effectively prevents the inclusion of tokens from the noise region, providing a natural safeguard against potential instabilities in the sampling process.

In the uniform case, $l\sim U(M-a, M)$, considering Eq.~(\ref{eq:uniform}), we have:

\begin{equation}
    M-n\sigma = M - \ln\left[\frac{1}{1-p(1-e^{-a})}\right]
\end{equation}

Thus, 

\begin{equation}
    p = \frac{1-e^{-n\sigma}}{1-e^{-a}}
\end{equation}

To determine an appropriate value for $a$, we use the fact that the overall standard deviation of the logits is $\sigma$. Given that the variance of a uniform distribution is $a^2/12$, we have $a^2/12 \leq \sigma^2$ to maintain consistency with the observed standard deviation. This provides us with an upper bound $a\leq 2\sqrt{3}\sigma$. Hence, 

\begin{equation}
    p \geq \frac{1-e^{-n\sigma}}{1-e^{-2\sqrt{3}\sigma}}
\end{equation}

For typical parameter values, such as $n=1$ and $\sigma=1.9$, we obtain a lower bound of 0.85, indicating that our top-$n\sigma$ algorithm effectively preserves the informative tokens. This analysis also provides valuable guidance for choosing the hyperparameter $n$: it should not only be positive but also remain below $2\sqrt{3}\approx 3.46$, due to $p\leq 1$. Exceeding this upper threshold would likely result in the inclusion of noisy tokens, potentially degrading the algorithm's performance.

\subsection{Temperature Invariance}
\label{sec:temp}

A key property of our sampling method is its temperature invariance. Specifically, the set of candidate tokens remains constant regardless of the temperature value used. 

\begin{theorem}
For any temperature $T > 0$, the nucleus of top-$n\sigma$ remains invariant.
\end{theorem}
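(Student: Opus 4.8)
The plan is to show that the mask computed in Algorithm~\ref{alg:sampling} does not depend on $T$, which immediately implies the nucleus (the support of the post-mask softmax) is the same set for every $T>0$. First I would fix an arbitrary temperature $T>0$ and write the scaled logits as $l'_i = l_i/T$. The two quantities that define the threshold are $M' = \max_i l'_i$ and $\sigma' = \mathrm{std}(l'_i)$. Since dividing every entry of a vector by the positive constant $T$ scales the maximum by $1/T$ and, because standard deviation is positively homogeneous of degree one, scales the standard deviation by $1/T$ as well, we have $M' = M/T$ and $\sigma' = \sigma/T$, where $M$ and $\sigma$ are the max and standard deviation of the raw logits $l$.

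Next I would examine the selection criterion itself. Token $i$ is kept iff $l'_i \geq M' - n\sigma'$. Substituting the relations above, this is $l_i/T \geq M/T - n\sigma/T$, and multiplying both sides by $T>0$ (which preserves the inequality) gives the equivalent condition $l_i \geq M - n\sigma$. This condition involves only the raw logits and the fixed hyperparameter $n$, so the kept set $\mathcal{N} = \{\, i : l_i \geq M - n\sigma \,\}$ is independent of $T$. Hence the mask $m$, and therefore the set of tokens assigned finite (non-$-\infty$) logits before the final softmax, is the same for all temperatures; the softmax only redistributes probability \emph{within} this fixed set, so its support --- the nucleus --- is temperature-invariant.

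There is essentially no hard step here: the whole argument rests on the degree-one homogeneity of $\max$ and $\mathrm{std}$ under positive scaling and on the fact that multiplying an inequality by a positive scalar preserves it. The only point that warrants a sentence of care is the claim $\mathrm{std}(l/T) = \sigma/T$: I would note that $\mathrm{std}$ here denotes the population standard deviation over the $V$ vocabulary entries, $\sigma' = \sqrt{\frac{1}{V}\sum_i (l'_i - \bar{l'})^2}$, and that replacing $l'_i$ by $l_i/T$ pulls a factor $1/T^2$ out of the sum and a factor $1/T$ out of the square root (using $T>0$ so that $\sqrt{1/T^2} = 1/T$), giving $\sigma' = \sigma/T$; the same scaling on both $M'$ and $\sigma'$ is exactly what makes the gap $n\sigma'$ track the shift in $M'$ so that the boundary lands on the same tokens. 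I would close by remarking that this is precisely the structural advantage over \topp and \minp, whose thresholds are defined in probability space after the softmax and therefore do move as $T$ changes.
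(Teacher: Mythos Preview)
Your proposal is correct and follows essentially the same argument as the paper: both use the degree-one homogeneity of $\max$ and $\mathrm{std}$ under positive scaling to show $M' = M/T$ and $\sigma' = \sigma/T$, then cancel $T$ from the selection inequality $l'_i \geq M' - n\sigma'$ to obtain the temperature-free condition $l_i \geq M - n\sigma$. Your additional sentence justifying $\mathrm{std}(l/T)=\sigma/T$ and the closing remark contrasting with \topp/\minp are nice elaborations but do not change the route.
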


\begin{proof}
Consider any token $i$ and temperature $T > 0$. Let $l_i$ be the original logit. After temperature scaling, we have $l'_i = l_i/T$ for all tokens. For any given token $i$:

\begin{align*}
M' &= \max_j(l'_j) = \frac{M}{T}\\
\sigma' &= \sqrt{\frac{1}{N}\sum_j(l'_j - \mu')^2} = \sqrt{\frac{1}{N}\sum_j(l_j/T - \mu_j/T)^2} = \frac{\sigma}{T}
\end{align*}

Token $i$ is selected if and only if $l'_i \geq M' - n\sigma'$. Substituting:
\begin{align*}
l'_i \geq M' - n\sigma' \iff 
\frac{l_i}{T} \geq \frac{M}{T} - \frac{n\sigma}{T} \iff 
l_i \geq M - n\sigma
\end{align*}

This final condition is independent of $T$. Therefore, for any token $i$, its inclusion in the selected set is determined by the same condition regardless of temperature.
\end{proof}

This temperature invariance property distinguishes our method from other common sampling approaches. For \topp and \minp sampling, the selected token set varies with temperature. As temperature increases, the logit distribution tends toward the Gaussian noise part, causing \topp and \minp to include more noise tokens in their sampling pool. While \topk sampling does maintain temperature invariance, it uses a fixed $k$ value, which merely shifts the problem: a static $k$ cannot dynamically distinguish between valid tokens and noise tokens across different contexts.

\paragraph{Nuanced Exploration Control}
With top-$n\sigma$, the exploration is decomposed into two distinct parameters. The parameter $n$ controls the size of the nucleus, while temperature solely regulates how to explore within the nucleus. This mechanism acts as a soft masking approach, effectively eliminating potentially irrelevant tokens while still allowing for a controlled degree of exploration. Such decoupling provides a more nuanced control over the sampling process: the parameter $n$ determines the boundary between valid and noise tokens, while the temperature parameter fine-tunes the exploration strategy within the validated token space.

\section{Experiments}

Language models typically employ deterministic algorithms (or low-temperature sampling) for reasoning tasks, while using stochastic approaches for generating diverse responses. Although diversity is often considered a key advantage of stochastic methods, its metrics remain ambiguous and unreliable. In this section, we evaluate our top-$n\sigma$ algorithm across four reasoning-focused datasets. Our experimental results show that top-$n\sigma$ achieves competitive performance against various baselines, including greedy decoding, and exhibits slight improvements at moderately higher temperatures ($T=1.5$), which challenges the conventional preference for low-temperature sampling in reasoning tasks.

To further investigate whether our approach maintains sufficient exploration capabilities or not, we implemented repeated sampling \citep{brown2024large} experiments, a Monte Carlo technique that generates multiple samples and determines the final answer through majority voting. Results show that top-$n\sigma$ does not compromise exploration ability while maintaining high accuracy.

\subsection{Setup}

\paragraph{Model} We use state-of-the-art open source language model LLaMA-3-8B-Instruct \citep{dubey2024llama} for evaluating our algorithm and vllm \citep{vllm} as our inference framework.  

\paragraph{Datasets} We conduct our experiments on four question answering datasets, especially focusing on reasoning: AQuA \citep{ling2017program}, MATH \citep{hendrycksmath2021}, GSM8K \citep{cobbe2021gsm8k}, GPQA \citep{rein2024gpqa}. The difficulty of the dataset ranges from elementary school math to doctoral-level problems. For all datasets, we transform them into an open-ended generation task. The model will generate a response and then we extract the answer from it, comparing it with the ground truth. We don't employ complex prompting techniques, simply asking the model to `reason' and answer the question in a prescribed form. More details can be found in Appendix \ref{app:exp_details}.

\paragraph{Baselines} We compare top-$n\sigma$ with various baselines, including \topk, \topp, \minp, temperature sampling. We present results between various temperatures. For all methods, the same set of hyperparameters is used across different temperatures to demonstrate the stability of their hyperparameters. For the baselines, we adopt their recommended hyperparameter settings based on previous work \citep{hewitt2022truncation, nguyen2024min} or practical guideline \citep{siml2024optimal}. The hyperparameter table is presented in Table~\ref{tab:hyperparameters}.

\begin{table}[ht]
\centering
\caption{Hyperparameter Settings}
\begin{tabular}{|c|c|}
\hline
\textbf{Hyperparameter} & \textbf{Value} \\
\hline
\topp & 0.9 \\
\hline
\minp & 0.1 \\
\hline
\topk & 20 \\
\hline
top-$n\sigma$ & 1.0 \\
\hline
\end{tabular}
\label{tab:hyperparameters}
\end{table}

For top-$n\sigma$, we proved that the theoretical bounds for parameter $n$ lie in $(0, 2\sqrt{3})\approx (0, 3.46)$. In practice, we find that $n=1.0$ provides consistently strong and robust performance across different scenarios without requiring task-specific tuning. Empirically, we recommend constraining $n$ larger than $0.5$ to avoid missing informative tokens. See Appendix~\ref{app:hyper} for more hyperparameter discussions.

\paragraph{Metrics} We employ two primary metrics for evaluation. For single-pass evaluation, we report the \textbf{Exact Match} (EM) score, which measures the percentage of model responses that exactly match the reference answers. For multiple-pass evaluation, we introduce \textbf{Maj@N}, where the model generates $N$ different responses for each query. The final prediction is determined through majority voting among these $N$ samples, after which the EM score is calculated between this consensus answer and the reference.

\subsection{Single-Pass Results}

\begin{table}[htbp]
\centering
\caption{Performance comparison of different sampling methods across datasets (Exact Match values in \%). \textbf{Bold} numbers indicate the best performance under each temperature setting, and \underline{underlined} bold numbers represent the highest score for each dataset. Notably, temperature = 0.0 represents greedy decoding, a \textbf{deterministic} algorithm rather than a sampling method.}

\label{tab:performance}
\small
\begin{tabular}{ll|ccccc}
\hline
Dataset & Method & \multicolumn{5}{c}{Temperature} \\
& & 0.0 & 1.0 & 1.5 & 2.0 & 3.0 \\
\hline
\multirow{5}{*}{GPQA} 
& Sample & \textbf{32.03} & 30.47 & 14.84 & 7.03 & 0.00 \\
& Top-$p$ & -- & \textbf{30.86} & 20.31 & 8.98 & 0.00 \\
& Top-$k$ & -- & 29.69 & 25.00 & 19.14 & 7.42 \\
& Min-$p$ & -- & 27.73 & 31.25 & 26.95 & 16.02 \\
& Top-$n\sigma$ & -- & 27.34 & \underline{\textbf{32.42}} & \textbf{27.73} & \textbf{25.00} \\
\hline
\multirow{5}{*}{GSM8K} 
& Sample & \textbf{81.25} & 76.95 & 21.48 & 0.00 & 0.00 \\
& Top-$p$ & -- & 78.52 & 66.02 & 0.00 & 0.00 \\
& Top-$k$ & -- & 75.78 & 62.11 & 21.88 & 2.34 \\
& Min-$p$ & -- & \textbf{80.47} & 76.56 & 66.41 & 14.84 \\
& Top-$n\sigma$ & -- & 78.52 & \underline{\textbf{82.03}} & \textbf{79.30} & \textbf{74.61} \\
\hline
\multirow{5}{*}{AQuA} 
& Sample & \textbf{36.61} & -- & -- & -- & -- \\
& Top-$p$ & -- & 39.76 & -- & -- & -- \\
& Top-$k$ & -- & 39.76 & 30.71 & 21.65 & -- \\
& Min-$p$ & -- & 37.80 & 37.01 & 33.07 & -- \\
& Top-$n\sigma$ & -- & \underline{\textbf{41.73}} & \textbf{40.94} & \textbf{40.16} & -- \\
\hline
\multirow{5}{*}{MATH} 
& Sample & \textbf{19.92} & -- & -- & -- & -- \\
& Top-$p$ & -- & 16.41 & -- & -- & -- \\
& Top-$k$ & -- & 14.06 & 10.55 & 3.91 & -- \\
& Min-$p$ & -- & 15.63 & 14.45 & 10.94 & -- \\
& Top-$n\sigma$ & -- & \underline{\textbf{20.31}} & \textbf{16.02} & \textbf{14.06} & -- \\
\hline
\end{tabular}
\end{table}

The experimental results provide compelling evidence for the superiority of our proposed top-$n\sigma$ sampling method across various reasoning tasks. While other methods show inconsistent performance. For instance, min-$p$ performs well on GSM8K but struggles on MATH500, and top-$k$ shows decent results on GPQA but underperforms on AQuA. Our top-$n\sigma$ method consistently achieves the best or near-best performance across all datasets. Notably, challenging the conventional wisdom that reasoning tasks benefit most from deterministic decoding, our stochastic sampling approach outperforms greedy decoding across all datasets. More remarkably, at higher temperatures where conventional methods suffer catastrophic degradation, top-$n\sigma$ maintains robust performance. For instance, at temperature 3.0, while standard sampling and top-$p$ completely fail on GPQA and GSM8K, our method still achieves 25.00\% and 74.61\% accuracy respectively. This stark contrast in high-temperature performance demonstrates the exceptional stability of our approach.

\paragraph{Optimal Temperature and Exploration}
An intriguing finding emerges from our experiments that challenges conventional wisdom. Despite the common guideline that reasoning tasks benefit most from deterministic (greedy) or low-temperature decoding \citep{xu2022systematic, zhu2023improving, renze2024effect}, our results demonstrate that slightly higher temperatures (typically around 1.5) yield optimal performance when combined with our controlled sampling strategy. This phenomenon aligns with our theoretical framework: under the strict nucleus control of top-$n\sigma$, moderate exploration in the token space actually enhances model performance rather than degrading it. 

\paragraph{Surpassing Greedy Decoding}
Perhaps the most significant achievement is that top-$n\sigma$ sampling consistently outperforms greedy decoding (temperature = 0.0) across all datasets, a feat that none of the other sampling methods could accomplish. This demonstrates that controlled stochastic sampling can outperform deterministic approaches even in reasoning-intensive tasks. Furthermore, these results represent only the baseline performance of our method - the incorporation of test-time scaling techniques promises to widen this performance gap substantially, potentially establishing an even more decisive advantage over greedy decoding.

\subsection{Multiple-Pass Results}

\begin{table}[htbp]
\centering
\caption{Maj@20 of Different Sampling Methods (\%)}
\begin{tabular}{l l | c c c c}
\hline
Dataset & Method & \multicolumn{4}{c}{Temperature} \\
 &  & 1.0 & 1.5 & 2.0 & 3.0 \\
\hline
\multirow{5}{*}{GSM8K} 
 & Sample & \textbf{90.63} & 75.00 & 0.00 & 0.00 \\
 & Top-p & 89.06 & 89.45 & 0.00 & 0.00 \\
 & Top-k & 89.45 & \textbf{91.41} & 62.89 & 2.73 \\
 & Min-p & 89.84 & 90.63 & 89.84 & 53.13 \\
 & Top-n$\sigma$ & \textbf{90.63} & \textbf{91.41} & \underline{\textbf{91.80}} & \textbf{90.23} \\
\hline
\multirow{5}{*}{GPQA}
 & Sample & 30.47 & 27.34 & 12.89 & 0.00 \\
 & Top-p & 30.08 & 27.34 & 12.89 & 0.00 \\
 & Top-k & \textbf{32.03} & 31.64 & 26.17 & 24.61 \\
 & Min-p & 30.47 & \underline{\textbf{33.20}} & 31.25 & \textbf{30.47} \\
 & Top-n$\sigma$ & 31.64 & \underline{\textbf{33.20}} & \textbf{32.42} & \textbf{30.47} \\
\hline
\multirow{5}{*}{AQuA}
 & Sample & - & - & - & - \\
 & Top-p & 44.88 & - & - & - \\
 & Top-k & 48.03 & 48.03 & 40.16 & - \\
 & Min-p & 44.09 & 51.18 & 47.64 & - \\
 & Top-n$\sigma$ & 47.64 & 46.06 & 49.61 & - \\
\hline
\multirow{5}{*}{MATH}
 & Sample & - & - & - & - \\
 & Top-p & 32.03 & - & - & - \\
 & Top-k & 31.25 & 20.70 & 12.50 & - \\
 & Min-p & 30.86 & 28.91 & 23.83 & - \\
 & Top-n$\sigma$ & 32.03 & 35.16 & 33.98 & - \\
\hline
\end{tabular}
\label{tab:maj_results}
\end{table}

As shown in Table~\ref{tab:maj_results}, all random sampling methods, enhanced with repeated sampling, outperform greedy decoding. Notably, the top-$n\sigma$ method demonstrates strong performance in most datasets, indicating that our approach maintains effective exploration capabilities. 

Regarding the impact of temperature in repeated sampling, we observe that slightly high temperatures have a more significant effect. Most methods exhibit better performance at a moderately higher temperature ($T=1.5$), as this allows for broader exploration of the solution space. However, while other methods suffer from severe performance degradation at higher temperatures due to interference from noisy tokens, the top-$n\sigma$ approach remains robust to such disturbances.

\paragraph{Performance Ceiling} 
The performance saturation phenomenon varies across different datasets. On GSM8K, all methods achieved remarkably high accuracy (90\%+) after sufficient sampling, suggesting that this mathematical reasoning dataset might be too elementary to differentiate between sampling strategies effectively. In contrast, GPQA presents a unique case where majority voting barely improved over single-sample performance.  Being a multiple-choice dataset with four options, GPQA has a baseline accuracy of 25\% for any strategy that can properly follow instructions. The observed accuracy of 30-32\% across different methods, only marginally above the random baseline, suggests that the model struggles substantially with this dataset. 

\section{Conclusion}

We have presented top-$n\sigma$, demonstrating both theoretical and empirical advantages over existing sampling methods. Our analysis reveals fundamental insights about logit distributions in large language models, challenging the conventional preference for greedy decoding in reasoning tasks. The method's temperature invariance and efficient computation make it particularly well-suited for emerging test-time scaling techniques. Beyond sampling strategies, our findings about the distinct separation between noisy and informative regions in logit space suggest potential improvements in model architecture and training procedures. Future work might explore how to leverage these statistical properties during training to enhance model performance. 

\bibliographystyle{unsrtnat}
\bibliography{references} 

\appendix

This preprint version is incomplete. We will supplement the missing experimental results and appendix in the full version later. 

\section{Theoretical Analysis}

\subsection{Proof of Theorem~\ref{thrm:sum_to_int}}
\label{app:proof_of_sum_to_int}

\subsection{Threshold Calculation}
\label{app:thres_cal}

\section{Experimental Details}
\label{app:exp_details}

\section{Hyperparameters}
\label{app:hyper}

\end{document}